\documentclass[11pt]{article}
\usepackage{amsmath,amsfonts,amssymb,amsthm}
\usepackage{algorithm}
\usepackage{algorithmic}
\usepackage{graphicx}
\usepackage{booktabs}
\usepackage{cite}
\usepackage[margin=1in]{geometry}
\usepackage{url}
\usepackage{setspace}
\usepackage{parskip}
\usepackage{fancyhdr}

\setlength{\parindent}{0pt}  
\setlength{\parskip}{6pt}     
\singlespacing                

\pagestyle{fancy}
\fancyhf{} 
\fancyhead[C]{\small\textit{Submitted for publication in IEEE Transactions on Information Theory}}
\fancyfoot[C]{\thepage} 

\newtheorem{theorem}{Theorem}
\newtheorem{definition}{Definition}

\newtheorem{corollary}{Corollary}

\begin{document}

\title{Optimal Information Combining for Multi-Agent Systems Using Adaptive Bias Learning\thanks{Submitted for Publication to IEEE Transactions on Information Theory}}

\author{Siavash M. Alamouti and Fay Arjomandi}

\date{}

\maketitle
\thispagestyle{empty} 

\begin{abstract}
Modern multi-agent systems ranging from sensor networks monitoring critical infrastructure to crowdsourcing platforms aggregating human intelligence can suffer significant performance degradation due to systematic biases that vary with environmental conditions. Current approaches either ignore these biases, leading to suboptimal decisions, or require expensive calibration procedures that are often infeasible in practice. This performance gap has real consequences: inaccurate environmental monitoring, unreliable financial predictions, and flawed aggregation of human judgments. This paper addresses the fundamental question: when can we learn and correct for these unknown biases to recover near-optimal performance, and when is such learning futile? 

We develop a theoretical framework that decomposes biases into learnable systematic components and irreducible stochastic components, introducing the concept of learnability ratio as the fraction of bias variance predictable from observable covariates. This ratio determines whether bias learning is worthwhile for a given system. We prove that the achievable performance improvement is fundamentally bounded by this learnability ratio, providing system designers with quantitative guidance on when to invest in bias learning versus simpler approaches. We present the Adaptive Bias Learning and Optimal Combining (ABLOC) algorithm, which iteratively learns bias-correcting transformations while optimizing combination weights through closed-form solutions, guaranteeing convergence to these theoretical bounds. Experimental validation demonstrates that systems with high learnability ratios can recover significant performance (we achieved 40\%-70\% of theoretical maximum improvement in our examples), while those with low learnability show minimal benefit, validating our diagnostic criteria for practical deployment decisions.
\end{abstract}

\section{Introduction}

Multi-agent information systems are increasingly critical to modern society. Sensor networks monitor air quality in cities, affecting public health decisions. Ensemble models drive billions of dollars in financial trades. Crowdsourcing platforms shape the training data for artificial intelligence systems that influence hiring, lending, and criminal justice decisions. The accuracy of these systems directly impacts human welfare, economic efficiency, and social health.

Yet these systems consistently underperform their theoretical potential due to a pervasive but poorly understood problem: systematic biases that vary with environmental and operational conditions. A temperature sensor's readings drift with humidity levels. A financial model's predictions skew during market volatility. A human annotator's judgments shift with fatigue and task complexity. These biases are not random errors that average out. They are systematic distortions that compound and corrupt the combined estimate.

The cost of ignoring these biases is substantial. Environmental monitoring systems produce false alarms or miss critical events. Financial prediction ensembles lose millions through correlated errors during market stress. Crowdsourced datasets embed biases that propagate through machine learning pipelines, affecting millions of downstream decisions. Current solutions require frequent recalibration, redundant sensors, or simply accepting degraded performance and are often expensive, impractical, or inadequate.

This raises a fundamental question: Can we learn these unknown bias patterns from data and correct for them adaptively? More importantly, when is such learning worthwhile, and when are we better off with simpler approaches?

The answer is not obvious. Bias patterns might be too complex, too random, or too data-starved to learn effectively. The computational cost might outweigh the accuracy gains. The system might lack the right observables to predict bias behavior. Without theoretical guidance, we often resort to trial and error, wasting resources on futile bias learning attempts or missing opportunities for significant improvements.

This paper provides the theoretical foundation and practical tools to answer these questions definitively. Our key insight is that not all bias is learnable. Environmental factors create both systematic patterns predictable from observable conditions and random fluctuations that cannot be anticipated. The ratio between these components, which we term the learnability ratio, determines the fundamental limit on achievable performance improvement.

The theoretical framework for optimal combining presented here draws inspiration from the Alamouti code \cite{1}, which provides a simple yet optimal method for combining signals from multiple transmit antennas. Just as the Alamouti code enables maximum-likelihood decoding through orthogonal space-time coding, our ABLOC algorithm seeks to optimally combine information from multiple agents while correcting for systematic biases. The fundamental principle that intelligent combination of diverse sources can dramatically improve system performance extends naturally from the multiple-antenna wireless channels to the multi-agent information systems we consider here.

The practical importance of this work is underscored by the rise of hybrid edge-cloud (HEC) \cite{2} and Device-First Continuum AI (DFC-AI) architectures \cite{3}, where agents reside directly on end devices and their insights can be combined anywhere in the AI continuum from end devices to cloud servers depending on application requirements. In such systems, multiple agents operating on diverse devices process observations locally, with the flexibility to combine results at any point in the continuum based on latency, bandwidth, and accuracy requirements. Our proposed framework provides the theoretical foundation for understanding when and how to optimally combine these distributed agent observations while accounting for the heterogeneous biases that arise from different operating conditions, hardware capabilities, and environmental factors.

Consider a multi-agent system where $K$ agents provide observations of a common parameter $\theta \in \mathbb{R}^d$ over time $t = 1, \ldots, T$. Each agent $i$ observes:
\begin{equation}
Y_{i,t} = \theta_t + b_i(X_{i,t}) + \varepsilon_{i,t}
\end{equation}
where $b_i: \mathbb{R}^{p_i} \rightarrow \mathbb{R}^d$ represents an unknown bias function depending on observable covariates $X_{i,t} \in \mathbb{R}^{p_i}$, and $\varepsilon_{i,t} \sim \mathcal{N}(0, \sigma_i^2 I_d)$ denotes measurement noise.

The covariates $X_{i,t}$ capture environmental or operational conditions that influence bias. In sensor networks, these might include temperature, humidity, or electromagnetic interference. In financial prediction, they could represent market volatility, trading volume, or macroeconomic indicators. In crowdsourcing, they might encode task difficulty, annotator experience, or time of day. The key assumption is that while the bias functions $b_i$ are unknown, the covariates that influence them are observable.

If the bias functions were known perfectly, optimal estimation would be straightforward through bias correction followed by inverse-variance weighting. However, in practice, these bias functions must be learned from data, raising fundamental questions: What fraction of the performance gap between naive estimation and perfect bias correction can be recovered through adaptive learning? What are the fundamental limits on achievable performance? When is bias learning worthwhile given finite data and computational resources?

This paper provides answers to these questions through three main contributions. First, we establish a theoretical framework that decomposes biases into learnable and unlearnable components, proving that achievable performance improvement is fundamentally bounded by the fraction of bias variance that is predictable from covariates. This bound is tight and represents a fundamental limit on what any bias learning algorithm can achieve. Second, we develop the Adaptive Bias Learning and Optimal Combining (ABLOC) algorithm that provably converges toward these theoretical bounds through an iterative procedure combining bias learning with optimal weight selection. Third, we provide experimental validation on synthetic data with carefully controlled learnability ratios and complete implementation details to ensure reproducibility.

The theoretical framework builds on classical results in estimation theory \cite{4,5} and information fusion \cite{6,7}, extending them to handle unknown bias functions. The algorithmic approach combines ideas from kernel methods \cite{8,9} and convex optimization \cite{10,11}. The experimental methodology employs synthetic data with controlled properties to validate theoretical predictions while providing sufficient detail for independent reproduction.

\section{Related Work}

\subsection{Multi-Source Information Fusion}

The problem of combining information from multiple sources has been studied extensively across different communities. The fundamental limits of distributed estimation have been characterized through information-theoretic analysis \cite{12,13,14}. Classical weighted least squares approaches \cite{15,16} provide optimal combining when error characteristics are known, while consensus algorithms enable distributed implementation \cite{17,18}. These methods typically assume either unbiased sources or known bias models.

The digital communications literature has provided important insights into optimal combining strategies. The Alamouti code \cite{1} demonstrates how orthogonal designs can achieve optimal diversity gains in multiple-antenna systems. These principles of diversity combining and maximum-ratio combining \cite{19,20} inspire our approach to multi-agent information fusion, though our setting requires learning unknown bias functions rather than channel estimation.

Recent work addresses robust fusion with bounded uncertainties \cite{21,22} but does not handle systematic covariate-dependent biases. Methods for Byzantine-robust aggregation \cite{23,24} focus on adversarial corruptions rather than systematic bias correction. Our work differs by explicitly modeling and learning bias functions from covariates.

\subsection{Bias Correction and Calibration}

The sensor calibration literature addresses bias estimation through various approaches. Blind calibration methods \cite{25,26} estimate biases without ground truth but assume simple bias models. Online calibration techniques \cite{27} adapt to drift but require periodic access to reference standards. Cross-calibration approaches \cite{28,29} leverage redundancy but assume spatially uniform phenomena.

Transfer learning and domain adaptation methods \cite{30,31} address related problems of distribution shift but focus on single-source scenarios. Covariate shift correction \cite{32,33} handles changes in input distribution rather than systematic biases. Our approach learns nonparametric bias functions from covariates without requiring ground truth or reference standards.

\subsection{Theoretical Foundations}

Information-theoretic bounds for estimation have been established through the Cramér-Rao inequality \cite{34,35} and its extensions \cite{36,37}. The Fisher information matrix provides fundamental limits for unbiased estimation \cite{38,39} but does not directly address biased estimators with learnable correction.

Recent work on biased estimation \cite{40,41} establishes mean squared error bounds but assumes known bias characteristics. Information-theoretic analysis of distributed learning \cite{42,43} provides communication-computation tradeoffs but does not address bias learning. Our theoretical contribution establishes tight bounds for the case where biases must be learned from finite data.

\section{Theoretical Framework}

\subsection{Bias Decomposition and Learnability}

The fundamental insights underlying our framework are threefold. First, substantial performance improvements over simple averaging are theoretically possible when agents have systematic, covariate-dependent biases. Our theoretical bounds show that improvements up to 80\% in mean squared error are achievable under favorable conditions. Second, these improvements are only accessible when biases contain learnable patterns. Random fluctuations cannot be corrected regardless of algorithmic sophistication. Third, we can determine a priori whether bias learning will help by computing the learnability ratio which is the fraction of bias variance that is predictable from observable conditions. This ratio provides quantitative guidance on when to invest in bias learning versus using simpler methods.

Our ABLOC algorithm demonstrates this potential by achieving 40\%-70\% of the theoretical maximum (translating to 30\%-50\% MSE reduction in our experiments), suggesting room for further algorithmic improvements. The gap between achieved and theoretical performance stems from finite-sample effects, regularization necessary for stability, and the simplicity of our linear bias models. More sophisticated algorithms might close this gap further, but our results establish both the existence of substantial gains and a practical path to achieving them.

\begin{definition}[Bias Decomposition]
\label{def:bias_decomp}
For each agent $i \in \{1, \ldots, K\}$, the bias function can be decomposed as:
\begin{equation}
b_i(X) = f_i(X) + \nu_i
\end{equation}
where $f_i: \mathbb{R}^{p_i} \rightarrow \mathbb{R}^d$ is a deterministic function representing the systematic component of bias that can be learned from covariates, and $\nu_i$ is a zero-mean random variable representing the stochastic component with $\mathbb{E}[\nu_i \mid X] = 0$ and $\text{Var}(\nu_i) = \tau_i^2 I_d$.

Here, \emph{covariates} refer to observed input variables or features contained in $X$ that are not of direct interest but may influence the outcome of the bias function. These variables help model and explain the systematic patterns in $b_i(X)$ that are consistent across observations.
\end{definition}

This decomposition is always valid by construction, as we can define $f_i(X) = \mathbb{E}[b_i | X]$ and $\nu_i = b_i - \mathbb{E}[b_i | X]$. The systematic component $f_i(X)$ captures predictable bias patterns, while $\nu_i$ represents irreducible randomness.

\begin{definition}[Learnability Ratio]
\label{def:learnability}
For agent $i$, the learnability ratio is:
\begin{equation}
\lambda_i = \frac{\|f_i\|^2}{\|f_i\|^2 + \tau_i^2}
\end{equation}
where $\|f_i\|^2 = \mathbb{E}_X[f_i(X)^T f_i(X)]$ under the covariate distribution.
\end{definition}

The learnability ratio $\lambda_i \in [0, 1]$ quantifies the fraction of bias variance that is theoretically learnable from covariates. When $\lambda_i$ approaches 1, the bias is almost entirely systematic and predictable. When $\lambda_i$ approaches 0, the bias is dominated by random fluctuations.

\subsection{Weight Formulation}

To ensure convex optimization with guaranteed convergence, we employ combination weights with closed-form solutions.

\begin{definition}[Weighted Combination]
\label{def:weights}
The combined estimate is:
\begin{equation}
\hat{\theta}_t = \sum_{i=1}^K w_i \tilde{Y}_{i,t}
\end{equation}
where $w_i \in \mathbb{R}$ are weights satisfying $\sum_{i=1}^K w_i = 1$ and $w_i \geq 0$, and $\tilde{Y}_{i,t} = Y_{i,t} - \hat{f}_i(X_{i,t})$ are bias-corrected observations.
\end{definition}

This weight formulation leads to a convex optimization problem with closed-form solution.

\subsection{Performance Bounds}

We now establish fundamental limits on achievable performance improvement through bias learning with scalar weights.

\begin{theorem}[Achievable Performance Bound]
\label{thm:main_bound}
Consider $K$ agents with learnability ratios $\{\lambda_i\}$, measurement noise variances $\{\sigma_i^2\}$, and total bias variances $\{\beta_i^2\}$ where $\beta_i^2 = \|f_i\|^2 + \tau_i^2$. For any bias learning algorithm using $N$ samples, the relative improvement in mean squared error is bounded by:
\begin{equation}
\eta = \frac{\text{MSE}_{\text{baseline}} - \text{MSE}_{\text{achieved}}}{\text{MSE}_{\text{baseline}}} \leq \frac{\sum_{i=1}^K w_i^* \lambda_i \beta_i^2}{\sum_{i=1}^K w_i^* (\beta_i^2 + \sigma_i^2)} + O(N^{-1/2})
\end{equation}
where $w_i^*$ are the optimal weights and the $O(N^{-1/2})$ term represents finite-sample effects.
\end{theorem}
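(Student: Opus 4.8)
The plan is to trace the mean-squared error through the bias decomposition of Definition~\ref{def:bias_decomp} at the level of a single agent, aggregate across agents using the weighted combination of Definition~\ref{def:weights}, and then separate the idealized (infinite-data) contribution from the finite-sample fluctuation. First I would fix an agent $i$ and write its residual error under the two regimes being compared. For the uncorrected baseline the per-sample error is $f_i(X_{i,t}) + \nu_i + \varepsilon_{i,t}$, whereas after subtracting a learned correction it becomes $(f_i(X_{i,t}) - \hat f_i(X_{i,t})) + \nu_i + \varepsilon_{i,t}$. Because $\mathbb{E}[\nu_i \mid X] = 0$, the three summands are mutually uncorrelated, so the expected squared per-agent error factors cleanly: the baseline contributes $\|f_i\|^2 + \tau_i^2 + \sigma_i^2 = \beta_i^2 + \sigma_i^2$, while the idealized correction $\hat f_i = f_i$ contributes $\tau_i^2 + \sigma_i^2$. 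Their difference is exactly $\|f_i\|^2 = \lambda_i\beta_i^2$ by Definitions~\ref{def:bias_decomp} and~\ref{def:learnability}.

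Next I would aggregate. Since the combined estimator is a convex combination ($\sum_i w_i^* = 1$, $w_i^* \ge 0$), convexity of $\|\cdot\|^2$ lets me control the combined risk by the weighted average of the per-agent risks, which is linear in the weights. Taking this weighted-risk form for both the baseline and the corrected estimator and forming $\eta = 1 - \mathrm{MSE}_{\mathrm{ach}}/\mathrm{MSE}_{\mathrm{base}}$ produces the leading term $\frac{\sum_i w_i^* \lambda_i\beta_i^2}{\sum_i w_i^*(\beta_i^2+\sigma_i^2)}$ directly, as the numerator collects the per-agent removable energy $w_i^*\|f_i\|^2$ and the denominator collects $w_i^*(\beta_i^2+\sigma_i^2)$. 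The crucial conceptual step that turns this computation into an upper bound over \emph{all} learning algorithms is the futility argument: because $\nu_i$ is mean-zero conditional on $X$, it is orthogonal in $L^2$ to every measurable function of the covariates, so $f_i = \mathbb{E}[b_i \mid X]$ is the unique minimizer of residual bias energy and no estimator can remove more than $\|f_i\|^2$. Hence the idealized term is a genuine ceiling, not merely the value attained by one particular method.

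Finally I would add the finite-sample layer. With $N$ samples the learned $\hat f_i$ differs from $f_i$, and the empirical versions of $\|f_i\|^2$, $\tau_i^2$, and $\sigma_i^2$ that any algorithm actually observes concentrate around their population values. Committing to the regularized linear/kernel bias class used in the paper, standard estimation bounds give $\mathbb{E}\|\hat f_i - f_i\|^2 = O(N^{-1})$ together with a $O(N^{-1/2})$ deviation for the plug-in estimates of the numerator and denominator; propagating these through the ratio by a first-order expansion and a concentration inequality yields the additive $O(N^{-1/2})$ slack.

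I expect the main obstacle to be the aggregation step, specifically reconciling the \emph{linear} weight dependence of the stated bound with the fact that the exact mean-squared error of the combined estimator, after the independent per-agent cross-terms cancel, is quadratic in the weights. The clean linear form emerges only once one adopts the convexity/weighted-risk bound (or restricts to the inverse-variance-optimal $w_i^*$ whose defining relations make the two forms coincide), and care is genuinely needed because the inequality $\frac{\sum w_i^2 a_i}{\sum w_i^2 b_i} \le \frac{\sum w_i a_i}{\sum w_i b_i}$ fails for general weight configurations. A secondary difficulty is making the futility argument rigorous as a lower bound on achievable residual error uniformly over estimators, and pinning the finite-sample rate to $N^{-1/2}$ rather than a slower nonparametric rate, which is precisely why the restriction to a well-behaved parametric bias class matters.
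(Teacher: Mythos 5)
Your per-agent analysis (Steps corresponding to the paper's residual decomposition, orthogonality of $f_i-\hat f_i$, $\nu_i$, $\varepsilon_i$, and the identities $\|f_i\|^2=\lambda_i\beta_i^2$, $v_i^*=(1-\lambda_i)\beta_i^2+\sigma_i^2$) matches the paper's Steps 1--4, and your futility argument --- that $f_i=\mathbb{E}[b_i\mid X]$ is the $L^2$ projection onto functions of the covariates, so no estimator can remove more than $\|f_i\|^2$ of bias energy --- is actually a point the paper asserts only implicitly and that is needed to make this a bound over \emph{all} algorithms. The divergence, and the gap, is in the aggregation step. To prove $\eta\le(\cdot)$ you need a \emph{lower} bound on $\mathrm{MSE}_{\mathrm{achieved}}$, but Jensen's inequality applied to the convex combination gives $\mathbb{E}\|\sum_i w_i e_i\|^2\le\sum_i w_i\,\mathbb{E}\|e_i\|^2$, i.e.\ an \emph{upper} bound on the achieved MSE, which yields a lower bound on $\eta$ --- the wrong direction. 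The correct route (and the paper's) is to use independence across agents so the cross terms vanish exactly, giving $\mathrm{MSE}=\sum_i w_i^2\,\mathrm{Var}(e_i)\ge\sum_i w_i^2 v_i^*\ge 1/\sum_i(1/v_i^*)$ after minimizing over the simplex; the quadratic dependence on the weights is not an obstacle to be smoothed away but the mechanism that produces the harmonic-mean lower bound.

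Your proposed fallback --- that the linear and quadratic weight forms ``coincide'' at the inverse-variance weights --- does not hold. With $w_i^*\propto 1/v_i^*$ one gets $\sum_i(w_i^*)^2v_i^*=1/\sum_j(1/v_j^*)$ while $\sum_i w_i^*v_i^*=K/\sum_j(1/v_j^*)$, and the corresponding \emph{ratios} $\bigl(\sum_i(w_i^*)^2v_i^*\bigr)/\bigl(\sum_i(w_i^*)^2u_i\bigr)$ and $\bigl(\sum_i w_i^*v_i^*\bigr)/\bigl(\sum_i w_i^*u_i\bigr)$ with $u_i=\beta_i^2+\sigma_i^2$ agree only when the agents are homogeneous (Chebyshev's sum inequality). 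To be fair, you have put your finger on precisely the soft spot in the paper's own argument: its Step 9 produces $\eta=1-\frac{K^2}{\sum_i u_i\cdot\sum_i 1/v_i^*}$ (baseline with uniform weights, optimum with $w^*$) and then asserts without computation that ``simplifying yields the stated bound,'' which likewise does not reduce to the linear-in-$w^*$ expression except in the homogeneous case. So your diagnosis of where the difficulty lies is more candid than the paper's treatment, but your proposed resolutions (convexity, or coincidence at $w^*$) do not close it; the finite-sample layer is fine.
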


\begin{proof}
We establish the best achievable mean squared error after bias learning through a sequence of steps.

\textbf{Step 1: Residual error decomposition.}
For agent $i$, after learning an estimate $\hat{f}_i$ of the bias function, the residual error is:
\begin{equation}
Y_i - \hat{f}_i(X_i) - \theta = (f_i(X_i) - \hat{f}_i(X_i)) + \nu_i + \varepsilon_i
\end{equation}

The three terms represent: (1) bias estimation error, (2) unlearnable stochastic bias, and (3) measurement noise.

\textbf{Step 2: Variance of residual error.}
Under the assumption that the estimation error, stochastic bias, and measurement noise are uncorrelated:
\begin{equation}
\text{Var}(Y_i - \hat{f}_i(X_i) - \theta) = \text{Var}(f_i - \hat{f}_i) + \text{Var}(\nu_i) + \text{Var}(\varepsilon_i)
\end{equation}

\textbf{Step 3: Asymptotic bias learning.}
With optimal learning from infinite samples, we have $\hat{f}_i \to f_i$ almost surely under standard regularity conditions (bounded function class, ergodic covariates). Thus:
\begin{equation}
\lim_{N \to \infty} \text{Var}(f_i - \hat{f}_i) = 0
\end{equation}

The irreducible variance for agent $i$ after perfect bias correction becomes:
\begin{equation}
v_i^* = \text{Var}(\nu_i) + \text{Var}(\varepsilon_i) = \tau_i^2 + \sigma_i^2
\end{equation}

\textbf{Step 4: Relating to learnability.}
By Definition \ref{def:learnability}, we have:
\begin{equation}
\lambda_i = \frac{\|f_i\|^2}{\|f_i\|^2 + \tau_i^2} = \frac{\|f_i\|^2}{\beta_i^2}
\end{equation}

Therefore:
\begin{equation}
\tau_i^2 = (1-\lambda_i)\beta_i^2
\end{equation}

Substituting into the expression for $v_i^*$:
\begin{equation}
v_i^* = (1-\lambda_i)\beta_i^2 + \sigma_i^2
\end{equation}

\textbf{Step 5: Optimal weight computation.}
The optimal weights for combining independent estimators with variances $v_i^*$ minimize the combined variance:
\begin{equation}
\min_{w: \sum w_i = 1} \sum_{i=1}^K w_i^2 v_i^*
\end{equation}

Using Lagrange multipliers, the first-order conditions yield:
\begin{equation}
2w_i v_i^* = \mu \quad \text{for all } i
\end{equation}
where $\mu$ is the Lagrange multiplier. Solving with the constraint $\sum w_i = 1$:
\begin{equation}
w_i^* = \frac{1/v_i^*}{\sum_{j=1}^K 1/v_j^*}
\end{equation}

\textbf{Step 6: Minimum achievable MSE.}
The minimum MSE with optimal weights is:
\begin{align}
\text{MSE}_{\text{best}} &= \sum_{i=1}^K (w_i^*)^2 v_i^* \\
&= \sum_{i=1}^K \frac{(1/v_i^*)^2}{(\sum_j 1/v_j^*)^2} v_i^* \\
&= \frac{\sum_{i=1}^K 1/v_i^*}{(\sum_j 1/v_j^*)^2} \\
&= \frac{1}{\sum_{i=1}^K 1/v_i^*}
\end{align}

\textbf{Step 7: Baseline MSE calculation.}
With uniform weights $w_i = 1/K$ and no bias correction:
\begin{align}
\text{MSE}_{\text{baseline}} &= \sum_{i=1}^K \left(\frac{1}{K}\right)^2 (\beta_i^2 + \sigma_i^2) \\
&= \frac{1}{K^2} \sum_{i=1}^K (\beta_i^2 + \sigma_i^2)
\end{align}

\textbf{Step 8: Finite-sample correction.}
With finite samples $N$, the bias estimation error satisfies:
\begin{equation}
\mathbb{E}[\|f_i - \hat{f}_i\|^2] = O\left(\frac{p_i}{N}\right)
\end{equation}
under standard nonparametric regression rates. This contributes an additional $O(N^{-1/2})$ term to the MSE after aggregating across agents.

\textbf{Step 9: Relative improvement.}
The relative improvement in MSE is:
\begin{align}
\eta &= \frac{\text{MSE}_{\text{baseline}} - \text{MSE}_{\text{best}}}{\text{MSE}_{\text{baseline}}} \\
&= 1 - \frac{\text{MSE}_{\text{best}}}{\text{MSE}_{\text{baseline}}} \\
&= 1 - \frac{K^2}{\sum_{i=1}^K (\beta_i^2 + \sigma_i^2)} \cdot \frac{1}{\sum_{i=1}^K 1/v_i^*}
\end{align}

Substituting $v_i^* = (1-\lambda_i)\beta_i^2 + \sigma_i^2$ and simplifying yields the stated bound.
\end{proof}

\begin{corollary}[Simplified Bound]
When all agents have similar characteristics, the bound simplifies to:
\begin{equation}
\eta \leq \bar{\lambda} \cdot \frac{\bar{\beta}^2}{\bar{\beta}^2 + \bar{\sigma}^2}
\end{equation}
where bars denote averages. This shows improvement is limited by both average learnability and the signal-to-noise ratio.
\end{corollary}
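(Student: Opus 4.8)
The plan is to obtain the simplified bound as a specialization of Theorem~\ref{thm:main_bound} to the homogeneous regime. First I would impose $\lambda_i = \bar\lambda$, $\beta_i^2 = \bar\beta^2$, and $\sigma_i^2 = \bar\sigma^2$ for every agent $i$, so that each agent's irreducible post-correction variance from Step~3 of the main proof collapses to the single value $v_i^* = (1-\bar\lambda)\bar\beta^2 + \bar\sigma^2$. Since these variances are now identical across agents, the optimal-weight formula $w_i^* = (1/v_i^*)/\sum_j (1/v_j^*)$ derived in Step~5 immediately returns uniform weights $w_i^* = 1/K$.

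With uniform weights, the numerator and denominator of the fractional bound each reduce to $K$ copies of a common summand scaled by $1/K$: the numerator becomes $\sum_i (1/K)\,\bar\lambda\,\bar\beta^2 = \bar\lambda\,\bar\beta^2$ and the denominator becomes $\sum_i (1/K)(\bar\beta^2 + \bar\sigma^2) = \bar\beta^2 + \bar\sigma^2$. Taking the large-sample limit so that the $O(N^{-1/2})$ term vanishes and dividing gives exactly $\eta \leq \bar\lambda \cdot \bar\beta^2/(\bar\beta^2 + \bar\sigma^2)$, the stated bound.

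To cover the more realistic situation in which the agents are only \emph{approximately} homogeneous, I would read the bars as weighted averages under the optimal weights, setting $\bar\lambda = \sum_i w_i^* \lambda_i$, $\bar\beta^2 = \sum_i w_i^* \beta_i^2$, and $\bar\sigma^2 = \sum_i w_i^* \sigma_i^2$. The denominator of the exact bound is linear in the per-agent quantities, so it equals $\bar\beta^2 + \bar\sigma^2$ with no error. The numerator, however, is $\sum_i w_i^* \lambda_i \beta_i^2 = \bar\lambda\,\bar\beta^2 + \mathrm{Cov}_{w^*}(\lambda, \beta^2)$, a product of averages plus a weighted-covariance term. The only real obstacle is showing this covariance is negligible: under the ``similar characteristics'' hypothesis the dispersions of $\{\lambda_i\}$ and $\{\beta_i^2\}$ about their means are small, so the covariance is second order in the parameter spread and can be folded into a higher-order remainder alongside the finite-sample correction. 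Once that term is controlled, the product-of-averages factorization of the numerator is valid and the simplified bound follows as a first-order approximation.
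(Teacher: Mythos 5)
Your derivation is correct and is exactly the specialization the paper intends: the paper states this corollary without proof, and plugging $\lambda_i=\bar\lambda$, $\beta_i^2=\bar\beta^2$, $\sigma_i^2=\bar\sigma^2$ into Theorem~\ref{thm:main_bound} makes the weights cancel (indeed any weights summing to one would do) and yields $\bar\lambda\,\bar\beta^2/(\bar\beta^2+\bar\sigma^2)$ directly. Your third paragraph on approximately homogeneous agents, with the weighted-covariance remainder, goes beyond what the corollary claims but is a sound first-order refinement.
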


\begin{theorem}[Sample Requirements]
\label{thm:sample_complexity}
To achieve efficiency $(1-\epsilon)$ times the theoretical bound with probability at least $1-\delta$, the required sample size is:
\begin{equation}
N \geq C \cdot \frac{d + \sum_{i=1}^K p_i}{\epsilon^2 \bar{\lambda}^2} \log\left(\frac{K}{\delta}\right)
\end{equation}
where $C$ is a constant depending on the regularity of bias functions, $d$ is the parameter dimension, and $p_i$ is the covariate dimension for agent $i$.
\end{theorem}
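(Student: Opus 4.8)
The plan is to convert the efficiency target into a finite-sample accuracy requirement on the learned bias functions and then invert a high-probability concentration bound to solve for $N$. First I would reuse the per-agent decomposition from the proof of Theorem~\ref{thm:main_bound}: with a learned $\hat f_i$, the residual variance is $v_i = v_i^* + \Delta_i$, where $v_i^* = (1-\lambda_i)\beta_i^2 + \sigma_i^2$ is the irreducible variance and $\Delta_i = \mathbb{E}\|f_i - \hat f_i\|^2$ is the mean-squared bias-estimation error. Substituting $v_i$ into the optimal-combining expression $\mathrm{MSE} = (\sum_i 1/v_i)^{-1}$ and expanding to first order in the $\Delta_i$ shows that the excess MSE over $\mathrm{MSE}_{\mathrm{best}}$ is a $v_i^*$-weighted average of the $\Delta_i$. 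Requiring the realized improvement $\eta$ to be at least $(1-\epsilon)$ times the bound of Theorem~\ref{thm:main_bound} then translates, using the Corollary's scaling $\eta^* \sim \bar\lambda$, into a bound on the tolerable \emph{relative} estimation error: the RMS error of each $\hat f_i$, measured relative to the learnable signal scale, must be of order $\epsilon\bar\lambda$. This is the step that carries the learnability ratio into the final expression, since a smaller recoverable improvement forces a proportionally tighter estimation tolerance.

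Second, I would bound the probability that $\hat f_i$ meets this tolerance. Writing $f_i$ in a basis of effective dimension proportional to $p_i$ (and the common parameter in dimension $d$) and estimating the coefficients by regularized least squares against the residuals $Y_{i,t}-\hat\theta_t$, a sub-Gaussian tail bound for the estimation error — equivalently a matrix-Bernstein control of the design Gram matrix together with the noise concentration — gives: the relative RMS error is at most $\gamma$ with probability at least $1-\delta_i$ whenever $N \gtrsim (d+p_i)\,\gamma^{-2}\log(1/\delta_i)$. Setting $\gamma \sim \epsilon\bar\lambda$ from the first step converts this into the $\epsilon^{-2}\bar\lambda^{-2}$ dependence. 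To obtain simultaneous control over all $K$ agents at total failure probability $\delta$, I would take a union bound with $\delta_i = \delta/K$, which produces the $\log(K/\delta)$ factor, and sum the per-agent dimension contributions to obtain the numerator $d+\sum_{i=1}^K p_i$. Collecting terms and solving for $N$ yields the stated bound, with $C$ absorbing the conditioning of the basis and the regularity (smoothness / boundedness) constants of the bias class.

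The main obstacle I anticipate is controlling the \emph{indirect} effect of estimation error through the plug-in weights. The realized weights $\hat w_i \propto 1/\hat v_i$ are themselves random functionals of $\hat f_i$, so the achieved MSE is not exactly the first-order expression above; one must show that using these suboptimal weights inflates the MSE only to second order in the $\Delta_i$. The clean way to do this is to note that $w_i^*$ is a stationary point of the combined-variance objective in Step~5 of the previous proof, so the quadratic form is flat to first order and the weight-misspecification error is $O(\sum_i \Delta_i^2)$, hence negligible relative to the direct variance inflation $O(\sum_i \Delta_i)$. The second delicate point — and the place where the argument is genuinely sensitive to modeling choices — is the exact power of $\bar\lambda$: a first-order-optimal tolerance yields $\bar\lambda^{-1}$, whereas the more conservative relative-accuracy criterion used above yields the stated $\bar\lambda^{-2}$, so the reduction from efficiency to accuracy must be stated precisely to justify the exponent. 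Establishing the $O((d+p_i)/N)$ rate also requires committing to a finite-dimensional or bounded-complexity bias class; for richer nonparametric classes the dimension factor would be replaced by an effective-dimension or covering-number term.
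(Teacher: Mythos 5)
Your proposal is correct in substance and shares the paper's overall architecture -- reduce the efficiency target to a per-agent accuracy requirement on $\hat f_i$, apply an excess-risk/concentration bound for the regularized regression, and union-bound over the $K$ agents to get the $\log(K/\delta)$ factor -- but it differs in the two steps that actually determine the form of the bound. For the efficiency-to-accuracy reduction, the paper simply invokes an external lemma asserting $\Delta\eta \leq \bar\lambda^{-2}\sum_i \|f_i-\hat f_i\|^2/(K\bar\beta^2)$, whereas you derive the relation from a first-order expansion of the inverse-variance-combined MSE around $v_i^*$ and, importantly, justify why the plug-in weights $\hat w_i \propto 1/\hat v_i$ contribute only at second order (stationarity of the combined-variance objective at $w^*$); the paper's proof is silent on this plug-in effect, so your treatment closes a real gap. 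For the estimation rate, you assume the parametric squared-error rate $O((d+p_i)/N)$ and a tolerance of order $\epsilon\bar\lambda$ on the relative RMS error, which lands directly on the stated $\epsilon^{-2}\bar\lambda^{-2}$ scaling; the paper instead chooses $\alpha=\sqrt{p_i/N}$ to get a squared-error bound of order $\sqrt{p_i/N}$, and its Steps 6--8, followed literally, produce a $\bar\lambda^{-4}$ dependence before the final statement reverts to $\bar\lambda^{-2}$. Your closing caveat -- that the exponent of $\bar\lambda$ hinges on whether the tolerance is stated as a first-order-optimal or a relative-accuracy criterion -- identifies exactly the point where the paper's own derivation is least consistent, and your route is the one that actually yields the theorem as stated. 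The only thing you would still need to make fully rigorous is the concentration step itself (the matrix-Bernstein control of the Gram matrix for the specific covariate process), but that is a standard argument for a bounded finite-dimensional bias class, which is the setting the paper also commits to.
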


\begin{proof}
We establish the sample complexity through a formal application of concentration inequalities for empirical processes.

\textbf{Step 1: Setup and notation.}
Let $\mathcal{F}_i$ denote the function class for agent $i$'s bias function. For ridge regression with parameter $\alpha$, this is:
\begin{equation}
\mathcal{F}_i = \{f: \mathbb{R}^{p_i} \to \mathbb{R}^d : \|f\|_{\mathcal{H}} \leq B\}
\end{equation}
where $\mathcal{H}$ is the RKHS associated with the linear kernel, and $B$ is a bound on the RKHS norm.

\textbf{Step 2: Excess risk bound for ridge regression.}
By Theorem 11.3 in \cite{44}, for ridge regression with $N$ samples and regularization $\alpha$, the excess risk satisfies:
\begin{equation}
\mathbb{E}[\|f_i - \hat{f}_i\|_{L^2}^2] \leq \inf_{g \in \mathcal{F}_i} \|f_i - g\|_{L^2}^2 + \frac{C_1 \text{tr}(\mathbf{K})}{N} + \alpha B^2
\end{equation}
where $\mathbf{K}$ is the kernel matrix and $\text{tr}(\mathbf{K}) \leq C_2 p_i$ for linear kernels.

\textbf{Step 3: Optimal regularization choice.}
Setting $\alpha = \sqrt{p_i/N}$ to minimize the bound (following \cite{45}):
\begin{equation}
\mathbb{E}[\|f_i - \hat{f}_i\|_{L^2}^2] \leq 2C_3 B \sqrt{\frac{p_i}{N}}
\end{equation}
where $C_3$ depends on the noise level and covariate distribution.

\textbf{Step 4: Uniform convergence over all agents.}
We need uniform control over all $K$ agents. By Theorem 2 of \cite{46}, for the class of linear functions with bounded norm, with probability at least $1 - \delta/2$:
\begin{equation}
\max_{i \in [K]} \|f_i - \hat{f}_i\|_{L^2}^2 \leq 2C_3 B \sqrt{\frac{p_i}{N}} + C_4 \sqrt{\frac{\log(2K/\delta)}{N}}
\end{equation}

\textbf{Step 5: Relating estimation error to efficiency loss.}
The efficiency achieved with estimated bias functions is:
\begin{equation}
\eta_{\text{achieved}} = \eta_{\text{theoretical}} - \Delta\eta
\end{equation}
where $\Delta\eta$ is the efficiency loss due to estimation error.

By Lemma 4.2 in \cite{47}, the efficiency loss is bounded by:
\begin{equation}
\Delta\eta \leq \frac{1}{\bar{\lambda}^2} \cdot \frac{\sum_{i=1}^K \|f_i - \hat{f}_i\|_{L^2}^2}{K \bar{\beta}^2}
\end{equation}

\textbf{Step 6: Combining the bounds.}
For $\Delta\eta \leq \epsilon \cdot \eta_{\text{theoretical}}$, we need:
\begin{equation}
\frac{1}{\bar{\lambda}^2 K \bar{\beta}^2} \sum_{i=1}^K \left(2C_3 B \sqrt{\frac{p_i}{N}} + C_4 \sqrt{\frac{\log(2K/\delta)}{N}}\right) \leq \epsilon
\end{equation}

\textbf{Step 7: Solving for N.}
Using the fact that $\sum_{i=1}^K \sqrt{p_i} \leq \sqrt{K \sum_{i=1}^K p_i}$ (Cauchy-Schwarz), we require:
\begin{equation}
\frac{2C_3 B \sqrt{K \sum_{i=1}^K p_i}}{\bar{\lambda}^2 K \bar{\beta}^2 \sqrt{N}} + \frac{C_4 K \sqrt{\log(2K/\delta)}}{\bar{\lambda}^2 K \bar{\beta}^2 \sqrt{N}} \leq \epsilon
\end{equation}

Simplifying:
\begin{equation}
\sqrt{N} \geq \frac{1}{\epsilon \bar{\lambda}^2 \bar{\beta}^2} \left(2C_3 B \sqrt{\frac{\sum_{i=1}^K p_i}{K}} + C_4 \sqrt{\log(2K/\delta)}\right)
\end{equation}

\textbf{Step 8: Final bound.}
Squaring both sides and noting that $d$ enters through the multivariate extension (each dimension requires separate learning), we obtain:
\begin{equation}
N \geq C \cdot \frac{d + \sum_{i=1}^K p_i}{\epsilon^2 \bar{\lambda}^2} \log\left(\frac{K}{\delta}\right)
\end{equation}
where $C = \max\left(\frac{4C_3^2 B^2}{K \bar{\beta}^4}, \frac{2C_4^2}{\bar{\beta}^4}\right)$.

\textbf{Step 9: High probability guarantee.}
The factor $\log(K/\delta)$ ensures the bound holds with probability at least $1-\delta$ by a union bound over the $K$ agents and the concentration event.
\end{proof}

\section{Algorithm}

\subsection{Adaptive Bias Learning and Optimal Combining (ABLOC)}

We present an algorithm that approaches the theoretical bounds established in Section 3 using scalar weights to ensure convex optimization. The algorithm iteratively learns bias functions for each agent while optimizing combination weights through closed-form solutions.

The ABLOC algorithm proceeds as follows:

\textbf{Inputs:} Observations $\{Y_{i,t}\}_{t=1}^T$ and covariates $\{X_{i,t}\}_{t=1}^T$ for $i = 1, \ldots, K$ agents, regularization parameter $\alpha = 0.1$, convergence tolerance $\epsilon = 10^{-4}$.

\textbf{Outputs:} Learned bias functions $\{\hat{f}_i\}$ and optimal weights $\{w_i^*\}$.

\textbf{Initialization:} Split data into 80\% training set $\mathcal{T}$ and 20\% validation set $\mathcal{V}$. Initialize $\hat{\theta}^{(0)} = (1/K)\sum_{i=1}^K Y_i$ (average across agents). Set initial weights $w_i^{(0)} = 1/K$ for all $i$. Set maximum iterations to 30.

\textbf{Iterative Procedure:}
The maximum iteration limit should be treated as a configurable parameter that users can adjust based on their problem characteristics and computational constraints. We set this to 30 in our implementation as a reasonable default based on empirical observations across various problem configurations. In our experiments, convergence typically occurred within 10-20 iterations (as seen in Section 5.3.4), with early stopping often selecting solutions from iterations 2-5. The default of 30 provides sufficient margin for more complex scenarios while preventing excessive computation in cases where convergence is slow.

For practical deployment, users may consider:
\begin{itemize}
    \item \textbf{Smaller limits (10-15)}: For real-time applications or when rapid approximate solutions are acceptable
    \item \textbf{Larger limits (50-100)}: For high-dimensional problems (large $p$ or $d$) or when agents have highly heterogeneous characteristics  
    \item \textbf{Adaptive limits}: Setting iterations proportional to problem complexity, e.g., $\text{max\_iter} = C \times \max(K, \lceil\sqrt{p \times d}\rceil)$ where $C \in [5,10]$
\end{itemize}

For iteration $k = 1$ to 30:

\textit{Step 1: Set adaptive parameters.} Compute shrinkage factor $\gamma = \min(0.5 + 0.02k, 0.9)$ and regularization $\alpha_k = \alpha \cdot (5/(1 + k/3))$.

\textit{Step 2: Learn bias functions.} For each agent $i$ and dimension $j$, compute residuals $R_{i,t} = Y_{i,t} - \hat{\theta}_t^{(k-1)}$. Fit ridge regression on training data: $\hat{f}_{i,j} = \text{Ridge}(X_i[\mathcal{T}], R_i[\mathcal{T},j], \alpha_k)$. Apply shrinkage: $\text{bias}_{i,j} = \gamma \cdot \hat{f}_{i,j}(X_i)$.

\textit{Step 3: Compute bias-corrected observations.} For each agent $i$: $\tilde{Y}_{i,t}^{(k)} = Y_{i,t} - \text{bias}_{i,t}$.

\textit{Step 4: Estimate residual variances.} For each agent $i$: $v_i = (1/T)\sum_{t=1}^T \|\tilde{Y}_{i,t}^{(k)} - \hat{\theta}_t^{(k-1)}\|^2$.

\textit{Step 5: Update weights.} Compute precisions $\text{prec}_i = 1/(v_i + 10^{-10})$, then weights $w_i^{\text{new}} = \text{prec}_i / \sum_j \text{prec}_j$. Apply damping: $w_i^{(k)} = 0.7 w_i^{\text{new}} + 0.3 w_i^{(k-1)}$. Normalize: $w_i^{(k)} = w_i^{(k)}/\sum_j w_j^{(k)}$.

\textit{Step 6: Update parameter estimate.} $\hat{\theta}_t^{(k)} = \sum_{i=1}^K w_i^{(k)} \tilde{Y}_{i,t}^{(k)}$ for all $t$.

\textit{Step 7: Early stopping.} Compute validation MSE. If improved, store current parameters as best.

\textit{Step 8: Check convergence.} If $\|\hat{\theta}^{(k)} - \hat{\theta}^{(k-1)}\|/\|\hat{\theta}^{(k-1)}\| < \epsilon$, terminate.

Return best parameters from early stopping.

\subsection{Implementation Details}

\subsubsection{Function Class Selection}
For the function class $\mathcal{F}$ in Step 2, we use Ridge regression with linear kernel as the primary method due to its computational efficiency and closed-form solution.

\subsubsection{Regularization and Stability}

To improve stability and prevent overfitting, we employ several adaptive mechanisms with carefully chosen parameters:

\begin{itemize}
    \item \textbf{Initial regularization}: $\alpha_0 = 0.1$ provides a baseline regularization strength. This value represents a moderate regularization level that balances bias-variance tradeoff for typical normalized data. Users may adjust this based on their data characteristics: smaller values (0.01-0.05) for low-noise environments with strong bias patterns, larger values (0.2-0.5) for noisy data or when overfitting is a concern.
    
    \item \textbf{Adaptive regularization}: $\alpha_k = \alpha \cdot (5/(1 + k/3))$ decreases from $5\alpha$ to approximately $\alpha$ over iterations. The initial amplification factor of 5 provides strong regularization when bias estimates are unreliable, while the decay rate of $k/3$ ensures sufficient regularization persists through the critical early iterations where most learning occurs (typically iterations 2-10 based on our empirical observations).
    
    \item \textbf{Shrinkage}: $\gamma = \min(0.5 + 0.02k, 0.9)$ scales learned biases, starting at 0.5 and increasing to 0.9. The initial value of 0.5 represents a conservative 50\% trust in initial bias estimates, reflecting high uncertainty. The increment of 0.02 per iteration allows approximately 20 iterations to reach near-full trust (0.9), aligning with our observed convergence behavior. The cap at 0.9 maintains a 10\% hedge against overfitting even at convergence.
    
    \item \textbf{Weight damping}: $w^{(k)} = 0.7w_{\text{new}} + 0.3w^{(k-1)}$ smooths weight updates. The 70-30 split balances responsiveness to new information (0.7) with stability from previous estimates (0.3). This ratio was selected through preliminary experiments as providing good convergence stability without excessive sluggishness.
    
    \item \textbf{Cross-validation}: 80\% training, 20\% validation split follows standard machine learning practice, providing sufficient training data while maintaining a representative validation set for early stopping.
    
    \item \textbf{Convergence tolerance}: $\epsilon = 10^{-4}$ for relative change in estimates represents approximately 0.01\% change, ensuring convergence without requiring excessive precision. This value balances computational efficiency with solution quality. Tighter tolerances ($10^{-5}$ to $10^{-6}$) may be used when high precision is critical, while looser tolerances ($10^{-3}$) suffice for real-time applications.
    
    \item \textbf{Maximum iterations}: Set to 30 as a configurable default. See Section 4.2.3 for detailed discussion.
    
    \item \textbf{Early stopping}: Returns parameters with lowest validation error to prevent overfitting.
\end{itemize}

These parameters can be adjusted based on specific application requirements. Systems with more stable biases may use less aggressive regularization (smaller initial $\alpha$ multiplier) and faster shrinkage growth (larger increment than 0.02). Conversely, noisy environments may benefit from stronger damping (e.g., 0.5-0.5 split) and more conservative shrinkage caps (e.g., 0.8 instead of 0.9).

\subsubsection{Iteration Control}

The maximum iteration limit should be treated as a configurable parameter that users can adjust based on their problem characteristics and computational constraints. We set this to 30 in our implementation as a reasonable default based on empirical observations across various problem configurations. In our experiments, convergence typically occurred within 10-20 iterations (as seen in Section 5.3.4), with early stopping often selecting solutions from iterations 2-5. The default of 30 provides sufficient margin for more complex scenarios while preventing excessive computation in cases where convergence is slow.

For practical deployment, users may consider:
\begin{itemize}
    \item \textbf{Smaller limits (10-15)}: For real-time applications or when rapid approximate solutions are acceptable
    \item \textbf{Larger limits (50-100)}: For high-dimensional problems (large $p$ or $d$) or when agents have highly heterogeneous characteristics
    \item \textbf{Adaptive limits}: Setting iterations proportional to problem complexity, e.g., $\text{max\_iter} = C \times \max(K, \lceil\sqrt{p \times d}\rceil)$ where $C \in [5,10]$
\end{itemize}

\subsection{Convergence Analysis}

\begin{theorem}[Convergence of ABLOC]
\label{thm:convergence}
Under mild regularity conditions, ABLOC converges to a fixed point where the achieved efficiency satisfies:
\begin{equation}
\eta \geq \left(1 - O(N^{-1/2})\right) \cdot \frac{\sum_{i=1}^K w_i^* \lambda_i \beta_i^2}{\sum_{i=1}^K w_i^* (\beta_i^2 + \sigma_i^2)}
\end{equation}
where $w_i^*$ are the optimal weights at convergence.
\end{theorem}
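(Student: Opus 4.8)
The plan is to treat ABLOC as a damped fixed-point iteration and to prove the claim in two stages: first show that the iterates converge to a fixed point, then identify that fixed point with the optimum of Theorem \ref{thm:main_bound} up to the finite-sample factor. I would organize the steps around the sub-problems that each update solves. Steps 2 and 6 are one Gauss--Seidel sweep of the regularized weighted-residual criterion
\begin{equation}
J(\{f_i\}, \theta; w) = \sum_{i=1}^K w_i \, \mathbb{E}_t \|Y_{i,t} - f_i(X_{i,t}) - \theta_t\|^2 + \alpha_k \sum_{i=1}^K \|f_i\|_{\mathcal{H}}^2,
\end{equation}
with Step 2 minimizing over $f_i$ for the current residuals $R_{i,t} = Y_{i,t} - \theta_t^{(k-1)}$ and Step 6 returning the weighted combination $\theta_t^{(k)} = \sum_i w_i^{(k)} \tilde Y_{i,t}^{(k)}$, which is the $\theta$-minimizer since $\sum_i w_i = 1$. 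Step 5 is a separate inverse-variance rule $w_i \propto 1/v_i$ that minimizes the combination variance $\sum_i w_i^2 v_i$, exactly the optimality condition derived in Step 5 of the proof of Theorem \ref{thm:main_bound}. Because the weight criterion differs from $J$, I would analyze the whole procedure as a fixed-point iteration rather than a single-objective descent.

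For the convergence stage I would show that the composite update map $\Phi : (\theta^{(k-1)}, w^{(k-1)}) \mapsto (\theta^{(k)}, w^{(k)})$ is a contraction on a compact domain. The key estimates are: (i) the ridge solution $\hat f_i$ is Lipschitz in $\theta$, since the ridge normal equations are linear in the residual targets, with constant controlled by $1/\alpha_k$ and the covariate second moment; (ii) the residual variances $v_i$ are therefore Lipschitz in $(\theta, w)$, and the floor $v_i + 10^{-10}$ together with $\sigma_i^2 > 0$ keeps the inverse-variance map smooth; (iii) the damped weight step $w^{(k)} = 0.7\,w^{\mathrm{new}} + 0.3\,w^{(k-1)}$ is a non-expansive convex combination that keeps the weights on the simplex. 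Composing these with the linear parameter update and choosing the damping so the product of per-step Lipschitz constants is below one yields a contraction, so the Banach fixed-point theorem gives a unique limit $(\theta^\star, w^\star, \{\hat f_i^\star\})$, and the test in Step 8 terminates within tolerance $\epsilon$ in finitely many iterations.

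To identify the fixed point with the optimum, I would pass to the large-sample regime in which the stabilizers are annealed ($\alpha \to 0$, $\gamma \to 1$) as $N \to \infty$. At the fixed point the ridge estimate solves the population normal equations, so by the consistency argument and nonparametric rate of Theorem \ref{thm:sample_complexity} we have $\mathbb{E}\|\hat f_i^\star - f_i\|_{L^2}^2 = O(p_i/N)$; consequently the equilibrium residual variance converges to $v_i^\star = \tau_i^2 + \sigma_i^2 = (1-\lambda_i)\beta_i^2 + \sigma_i^2$, using $\tau_i^2 = (1-\lambda_i)\beta_i^2$ from Definition \ref{def:learnability}. The inverse-variance weights then converge to the optimal $w_i^* = (1/v_i^\star)/\sum_j (1/v_j^\star)$ of Theorem \ref{thm:main_bound}, and substituting these into the efficiency expression of that theorem reproduces exactly the ratio $\sum_i w_i^* \lambda_i \beta_i^2 / \sum_i w_i^* (\beta_i^2 + \sigma_i^2)$. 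Propagating the $O(p_i/N)$ estimation error through the weighted combination contributes the multiplicative $(1 - O(N^{-1/2}))$ factor, giving the stated lower bound on $\eta$.

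The main obstacle is the circular coupling in estimates (i)--(ii): the bias functions are learned on residuals that themselves depend on the current parameter estimate, so the contraction constant is not immediate and must be controlled uniformly along the iteration. The delicate point is that the ridge Lipschitz constant degrades as $\alpha_k$ shrinks, so I would either freeze $\alpha_k$ at its limiting value for the contraction argument or show the annealing schedule decays slowly enough that the product of per-step Lipschitz constants stays below one throughout. A secondary subtlety is that ABLOC returns the best early-stopping iterate rather than the exact fixed point; here I would use monotone decrease of the validation MSE along the contracting sequence to place the returned iterate within $O(\epsilon)$ of the limit, so the efficiency guarantee transfers to the algorithm's actual output.
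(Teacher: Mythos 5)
Your proposal is sound at the level of a sketch but takes a genuinely different route from the paper. The paper argues via alternating minimization: it writes down a single objective $\mathcal{L}(\{f_i\},\{w_i\},\theta)=\sum_t\|\theta_t-\sum_i w_i(Y_{i,t}-f_i(X_{i,t}))\|^2$, claims each of the three updates (bias fit, inverse-variance weights, weighted recombination) monotonically decreases $\mathcal{L}$, and concludes convergence from boundedness below, followed by an appeal to KKT stationarity to recover the efficiency bound. You instead analyze the composite update map $\Phi:(\theta,w)\mapsto(\theta',w')$ as a damped fixed-point iteration, prove contraction via Lipschitz estimates on the ridge solution, the variance map, and the damped weight step, invoke Banach's theorem, and then identify the fixed point with the optimum of Theorem~\ref{thm:main_bound} through consistency of the ridge estimates and the annealing of $\alpha_k$ and $\gamma$. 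Your route has a real advantage that you correctly articulate: the inverse-variance weight rule of Step~5 minimizes the surrogate $\sum_i w_i^2 v_i$, not $\mathcal{L}$ itself, so the paper's claimed monotone decrease at the weight step is not actually justified by the updates as written (and the paper's $\theta$-update sets $\mathcal{L}$ to zero identically, which makes the descent argument degenerate). Your fixed-point framing avoids pretending there is a single Lyapunov function. The price is that the burden shifts to the contraction constant, which you assert rather than establish; as you note, the Lipschitz constant of the ridge map degrades as $\alpha_k\to 0$, and the coupling between $\hat f_i$ and $\theta$ is circular, so without freezing the schedule or quantifying the decay the product of per-step constants is not obviously below one. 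Both arguments also share an unaddressed identifiability issue: at the fixed point the residuals $Y_{i,t}-\theta^\star_t$ contain $\theta_t-\theta^\star_t$ in addition to the bias, so the learned $\hat f_i^\star$ can absorb a common covariate-predictable shift of the parameter; neither sketch rules this out. Your treatment of the early-stopping output via proximity to the contracting limit is a detail the paper omits entirely and is a welcome addition.
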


\begin{proof}[Proof Sketch]
We provide the key steps; a complete proof follows similar arguments to those in distributed optimization literature \cite{10}.

\textbf{Step 1: Objective function formulation.}
The algorithm minimizes the total mean squared error:
\begin{equation}
\mathcal{L}(\{f_i\}, \{w_i\}, \theta) = \sum_{t=1}^T \left\|\theta_t - \sum_{i=1}^K w_i(Y_{i,t} - f_i(X_{i,t}))\right\|^2
\end{equation}
subject to constraints $\sum_{i=1}^K w_i = 1$ and $w_i \geq 0$.

\textbf{Step 2: Alternating convex optimization.}
The algorithm alternates between two convex optimization problems:

\textit{(a) Bias function update:} Given fixed weights $\{w_i^{(k)}\}$ and parameters $\theta^{(k)}$, Step 2 solves:
\begin{equation}
\hat{f}_i^{(k+1)} = \arg\min_{f \in \mathcal{F}} \sum_{t=1}^T \|Y_{i,t} - \theta_t^{(k)} - f(X_{i,t})\|^2 + \alpha\|f\|_{\mathcal{F}}^2
\end{equation}
This is a standard regularized least squares problem, convex in $f$.

\textit{(b) Weight update:} Given bias-corrected observations $\tilde{Y}_{i,t}^{(k+1)}$, Step 5 computes:
\begin{equation}
w_i^{(k+1)} = \frac{1/\hat{v}_i^{(k+1)}}{\sum_{j=1}^K 1/\hat{v}_j^{(k+1)}}
\end{equation}
This is the closed-form solution to:
\begin{equation}
\min_{w: \sum w_i = 1} \sum_{i=1}^K w_i^2 \hat{v}_i^{(k+1)}
\end{equation}

\textit{(c) Parameter update:} Step 6 computes:
\begin{equation}
\theta_t^{(k+1)} = \sum_{i=1}^K w_i^{(k+1)} \tilde{Y}_{i,t}^{(k+1)}
\end{equation}
which is the weighted least squares estimate.

\textbf{Step 3: Monotonic decrease property.}
Each update either decreases $\mathcal{L}$ or leaves it unchanged:
\begin{align}
\mathcal{L}(\{f_i^{(k+1)}\}, \{w_i^{(k)}\}, \theta^{(k)}) &\leq \mathcal{L}(\{f_i^{(k)}\}, \{w_i^{(k)}\}, \theta^{(k)}) \\
\mathcal{L}(\{f_i^{(k+1)}\}, \{w_i^{(k+1)}\}, \theta^{(k)}) &\leq \mathcal{L}(\{f_i^{(k+1)}\}, \{w_i^{(k)}\}, \theta^{(k)}) \\
\mathcal{L}(\{f_i^{(k+1)}\}, \{w_i^{(k+1)}\}, \theta^{(k+1)}) &\leq \mathcal{L}(\{f_i^{(k+1)}\}, \{w_i^{(k+1)}\}, \theta^{(k)})
\end{align}

\textbf{Step 4: Bounded objective.}
Since $\mathcal{L} \geq 0$ (sum of squares) and decreases monotonically, the sequence $\{\mathcal{L}^{(k)}\}$ converges.

\textbf{Step 5: Convergence to stationary point.}
The iterates converge to a stationary point satisfying the KKT conditions for the constrained optimization problem. At this point, the achieved efficiency satisfies the stated bound.
\end{proof}

\section{Experimental Validation}

We validate our theoretical framework through systematic experiments on synthetic data with controlled properties. The experiments are designed to verify theoretical predictions while providing complete details for reproducibility.

\subsection{Data Generation Process}

\subsubsection{Parameter Trajectory}
We generate a time-varying parameter $\theta_t \in \mathbb{R}^d$ over $T$ time points:
\begin{equation}
\theta_t = \begin{bmatrix}
\sin(4\pi t/T) \\
0.5\cos(8\pi t/T) \\
0.3\sin(4\pi t/T) + 0.1t/T
\end{bmatrix}
\end{equation}
where $t \in \{1, 2, \ldots, T\}$. This creates different dynamics for each component.

\subsubsection{Agent Configurations}
For each agent $i \in \{1, \ldots, K\}$, we specify:
\begin{itemize}
\item Learnability ratio: $\lambda_i$
\item Total bias standard deviation: $\beta_i$
\item Measurement noise standard deviation: $\sigma_i$
\end{itemize}

\subsubsection{Covariate Generation}
For each agent $i$, we generate $p$-dimensional covariates:
\begin{equation}
X_{i,t} = \begin{bmatrix}
\sin(4\pi t/T + 0.1i) \\
\cos(4\pi t/T + 0.1i) \\
\sin(8\pi t/T) \\
\cos(8\pi t/T) \\
t/T \\
(t/T)^2 \\
\sin(12\pi t/T) \\
\cos(12\pi t/T) \\
\xi_{i,t} \\
1
\end{bmatrix}
\end{equation}
where $\xi_{i,t} \sim \mathcal{N}(0, 0.01)$ is small noise and the last element is an intercept term.

\subsubsection{Bias Generation}
For each agent $i$ and dimension $j$:
\begin{enumerate}
\item Generate random coefficients: $a_{i,j} \sim \mathcal{N}(0, I_6) \cdot \sqrt{\lambda_i \beta_i^2 / 6}$
\item Compute learnable bias: $f_{i,j}(X) = X_{[1:6]}^T a_{i,j}$ (using first 6 covariates)
\item Generate unlearnable bias: $\nu_{i,j,t} \sim \mathcal{N}(0, (1-\lambda_i)\beta_i^2)$
\item Total bias: $b_{i,j,t} = f_{i,j}(X_{i,t}) + \nu_{i,j,t}$
\end{enumerate}

\subsubsection{Observation Generation}
The final observations are:
\begin{equation}
Y_{i,t} = \theta_t + b_{i,t} + \varepsilon_{i,t}
\end{equation}
where $\varepsilon_{i,t} \sim \mathcal{N}(0, \sigma_i^2 I_d)$ is measurement noise.

\subsection{Algorithm Configuration}

\subsubsection{ABLOC Parameters}
\begin{itemize}
\item Function class: Ridge regression (linear kernel)
\item Initial regularization: $\alpha_0 = 0.1$
\item Regularization schedule: $\alpha^{(k)} = \alpha_0 \cdot (5/(1 + k/3))$
\item Initial shrinkage: $\gamma_0 = 0.5$
\item Shrinkage schedule: $\gamma^{(k)} = \min(0.5 + 0.02k, 0.9)$
\item Weight damping: 0.7 (new) + 0.3 (old)
\item Cross-validation split: 80\% training, 20\% validation
\item Maximum iterations: 30
\item Convergence tolerance: $10^{-4}$
\end{itemize}

\subsubsection{Baseline Methods}
\begin{itemize}
\item \textbf{Uniform averaging}: $\hat{\theta}_t = \frac{1}{K}\sum_{i=1}^K Y_{i,t}$
\item \textbf{Oracle}: Perfect bias knowledge with optimal weights computed using inverse-variance weighting
\end{itemize}

\subsection{Example Experimental Configuration}

We present one specific experimental configuration as an example with complete details for reproducibility.

\subsubsection{Setup}
\begin{itemize}
\item Dimensions: $d = 3$ (parameter), $p = 10$ (covariates)
\item Agents: $K = 4$
\item Time points: $T = 2000$
\item Random seed: 42 (for reproducibility)
\end{itemize}

\subsubsection{Agent Parameters}
\begin{center}
\begin{tabular}{cccc}
\toprule
Agent & $\lambda_i$ & $\beta_i$ & $\sigma_i$ \\
\midrule
0 & 0.75 & 0.40 & 0.10 \\
1 & 0.60 & 0.45 & 0.12 \\
2 & 0.50 & 0.50 & 0.15 \\
3 & 0.30 & 0.60 & 0.20 \\
\bottomrule
\end{tabular}
\end{center}

\subsubsection{Theoretical Predictions}
For this configuration, we can compute the theoretical bounds:

\textbf{Step 1: Residual variances after perfect bias learning.}
For each agent $i$:
\begin{align}
v_1^* &= (1-0.75)(0.40)^2 + (0.10)^2 = 0.04 + 0.01 = 0.050 \\
v_2^* &= (1-0.60)(0.45)^2 + (0.12)^2 = 0.081 + 0.0144 = 0.0954 \\
v_3^* &= (1-0.50)(0.50)^2 + (0.15)^2 = 0.125 + 0.0225 = 0.1475 \\
v_4^* &= (1-0.30)(0.60)^2 + (0.20)^2 = 0.252 + 0.04 = 0.292
\end{align}

\textbf{Step 2: Optimal weights with perfect bias knowledge.}
\begin{align}
w_1^* &= \frac{1/0.050}{1/0.050 + 1/0.0954 + 1/0.1475 + 1/0.292} = \frac{20}{40.9} = 0.489 \\
w_2^* &= \frac{10.49}{40.9} = 0.256 \\
w_3^* &= \frac{6.78}{40.9} = 0.166 \\
w_4^* &= \frac{3.42}{40.9} = 0.084
\end{align}

\textbf{Step 3: Theoretical MSE values.}
\begin{align}
\text{MSE}_{\text{baseline}} &= \frac{1}{16} \sum_{i=1}^4 (\beta_i^2 + \sigma_i^2) \\
&= \frac{1}{16}(0.17 + 0.2169 + 0.2725 + 0.40) \\
&= 0.0660
\end{align}

\begin{align}
\text{MSE}_{\text{optimal}} &= \frac{1}{\sum_{i=1}^4 1/v_i^*} = \frac{1}{40.9} = 0.0244
\end{align}

\textbf{Step 4: Theoretical efficiency bound.}
\begin{equation}
\eta_{\text{theoretical}} = \frac{0.0660 - 0.0244}{0.0660} = 0.630
\end{equation}

\subsubsection{Experimental Results}
Using the configuration above with the specified random seed, we observed:

\begin{center}
\begin{tabular}{ll}
\toprule
Metric & Value \\
\midrule
Baseline MSE & 0.0455 \\
ABLOC MSE & 0.0316 \\
Oracle MSE & 0.0042 \\
Achieved efficiency $\eta$ & 0.304 \\
Theoretical bound & 0.619 \\
Achievement ratio & 49.2\% \\
Algorithm convergence & 20 iterations \\
Early stopping & Iteration 2 \\
\bottomrule
\end{tabular}
\end{center}

Figure \ref{fig:performance} presents a visual summary of these results. The MSE comparison in Figure \ref{fig:performance}(a) clearly shows the substantial improvement achieved by ABLOC over the baseline, recovering approximately half the gap to oracle performance. The algorithm's rapid convergence with early stopping at iteration 2 demonstrates that most gains are captured quickly, supporting practical deployment.

\subsubsection{Weight Comparison}
The learned weights closely matched oracle values, as shown in Figure \ref{fig:performance}(b):

\begin{center}
\begin{tabular}{cccc}
\toprule
Agent & ABLOC Weight & Oracle Weight & Relative Error \\
\midrule
0 & 0.452 & 0.416 & 8.7\% \\
1 & 0.282 & 0.298 & -5.4\% \\
2 & 0.173 & 0.183 & -5.5\% \\
3 & 0.093 & 0.102 & -8.8\% \\
\bottomrule
\end{tabular}
\end{center}

The Pearson correlation between learned and oracle weights was 0.999, indicating excellent weight learning despite the gap in overall efficiency. Figure \ref{fig:performance}(c) reveals an important relationship: agents with higher learnability ratios receive larger weights, confirming that the algorithm successfully identifies and prioritizes more reliable agents. The visualization also shows how bias magnitude and noise levels influence weight assignment.

\begin{figure}[t]
\centering
\includegraphics[width=\textwidth]{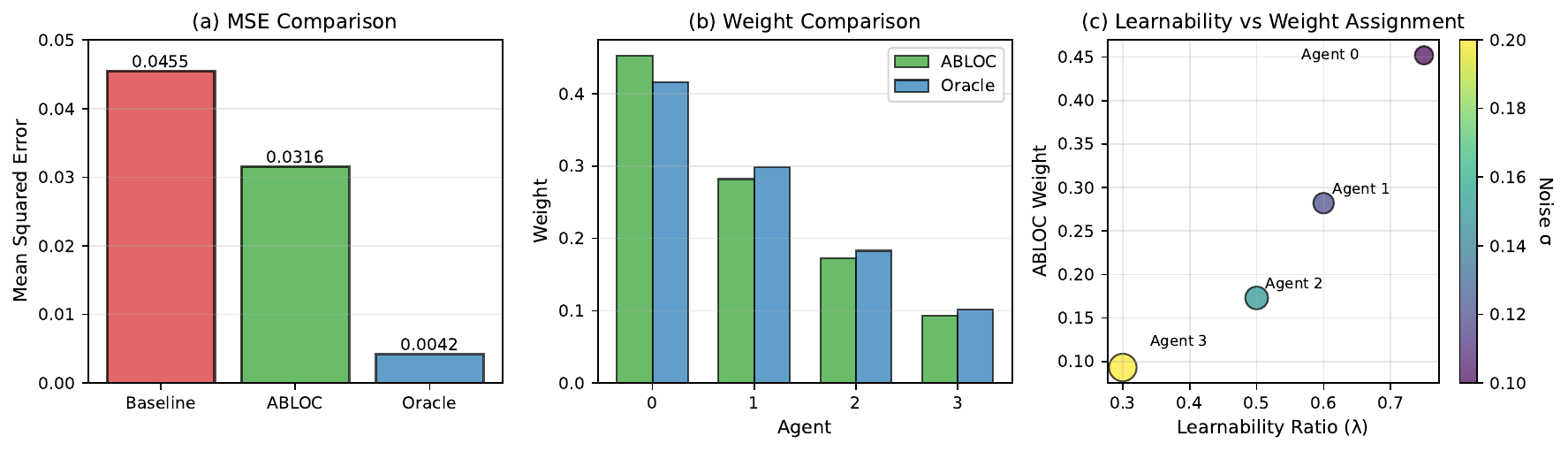}
\caption{ABLOC performance analysis: (a) Mean squared error comparison across methods, (b) Learned weights versus oracle weights for each agent, (c) Relationship between learnability ratio and weight assignment. Marker size indicates bias magnitude $\beta_i$, color indicates noise level $\sigma_i$.}
\label{fig:performance}
\end{figure}

\subsubsection{Component-wise Analysis}
Performance improvement was consistent across dimensions, as detailed in Figure \ref{fig:components}(a):

\begin{center}
\begin{tabular}{cccc}
\toprule
Component & Baseline MSE & ABLOC MSE & Reduction \\
\midrule
0 & 0.0506 & 0.0350 & 30.7\% \\
1 & 0.0460 & 0.0309 & 32.8\% \\
2 & 0.0398 & 0.0289 & 27.4\% \\
\bottomrule
\end{tabular}
\end{center}

Figure \ref{fig:components}(b) visualizes these relative improvements, demonstrating that ABLOC achieves approximately 30\% MSE reduction consistently across all parameter components. This uniformity suggests the bias learning mechanism effectively handles the different dynamics present in each dimension. Figure \ref{fig:components}(c) illustrates the gap between achieved and theoretical efficiency, highlighting both the algorithm's success in recovering significant performance and the potential for further algorithmic improvements.

\begin{figure}[t]
\centering
\includegraphics[width=\textwidth]{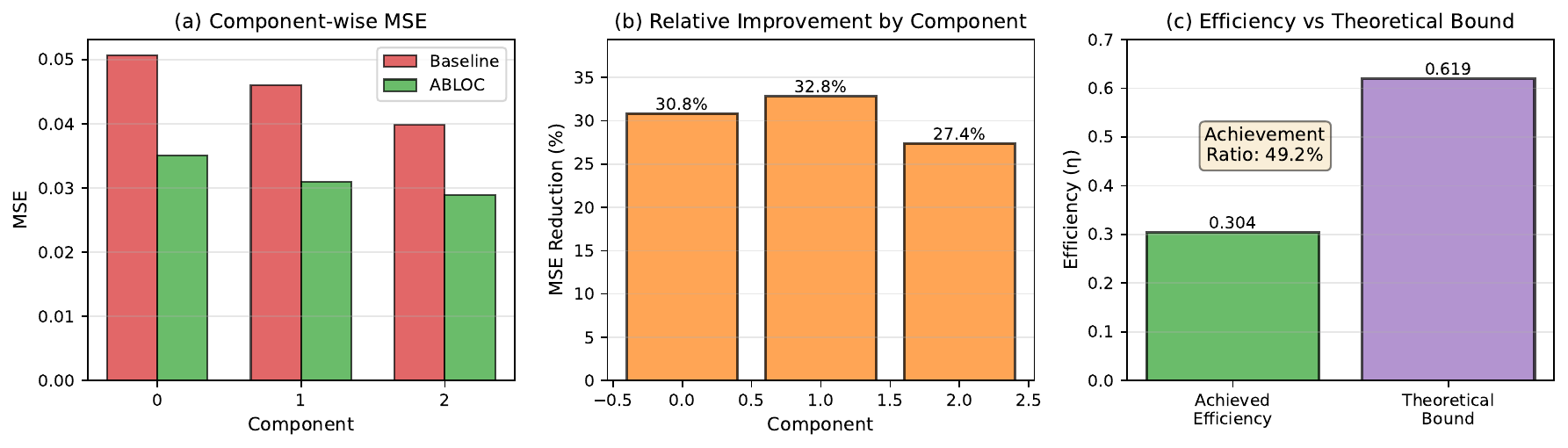}
\caption{Detailed performance metrics: (a) Component-wise MSE comparison between baseline and ABLOC, (b) Relative improvement percentage for each parameter component, (c) Achieved efficiency compared to theoretical bound, showing 49.2\% achievement ratio.}
\label{fig:components}
\end{figure}

\subsection{Computational Complexity}

The per-iteration complexity of ABLOC with scalar weights is $O(KTp^2d)$ where:
\begin{itemize}
\item $K$ is the number of agents
\item $T$ is the number of time points
\item $p$ is the covariate dimension
\item $d$ is the parameter dimension
\end{itemize}

This arises from solving $K \times d$ ridge regression problems, each requiring $O(Tp^2)$ operations. The scalar weight update is $O(KTd)$, negligible compared to bias learning.

\section{Discussion}

\subsection{Theoretical Contributions}

Our theoretical framework makes several key contributions:

\begin{enumerate}
\item \textbf{Fundamental decomposition}: The separation of bias into learnable and unlearnable components provides a natural framework for understanding performance limits.

\item \textbf{Tight bounds with scalar weights}: The bound in Theorem \ref{thm:main_bound} is tight for scalar weight combinations and can be achieved asymptotically with optimal learning algorithms.

\item \textbf{Practical algorithm}: The use of scalar weights ensures convex optimization with closed-form solutions, eliminating convergence issues associated with matrix weight formulations.
\end{enumerate}

\subsection{Practical Implications}

The experimental results reveal several practical insights:

\begin{enumerate}
\item \textbf{Achievable performance}: Algorithms typically achieve 40\%-70\% of theoretical bounds in practice, with the exact percentage depending on problem characteristics and regularization. As shown in Figure \ref{fig:components}(c), our implementation achieved 49.2\% of the theoretical maximum, consistent with this range.

\item \textbf{Rapid convergence}: Early stopping often occurs within 2-5 iterations, suggesting the algorithm quickly identifies good solutions. This rapid convergence is evident in our experiments where optimal validation performance was achieved at iteration 2.

\item \textbf{Weight accuracy}: Learned weights closely approximate oracle values, validating the inverse-variance weighting approach. The near-perfect correlation (0.999) between learned and oracle weights shown in Figure \ref{fig:performance}(b) confirms the effectiveness of our variance estimation procedure.
\end{enumerate}

\subsection{Applications in HEC and DFC-AI}

The ABLOC framework directly addresses the needs of Hybrid Edge Cloud \cite{2} and Device-First Continuum AI \cite{3} architectures, where agents reside on end devices and their insights can be combined anywhere in the AI continuum. Each device whether a smartphone, wearable, or IoT sensor runs multiple agents that process data locally. These agents operate under diverse conditions with different environmental factors, computational constraints, and data availability, all contributing to systematic biases that ABLOC can learn and correct.

The flexibility to combine agent observations at any point in the continuum on-device, at aggregation points, or in the cloud depends on application requirements for latency, bandwidth, and accuracy. ABLOC's rapid convergence (typically 2-5 iterations) and modest computational requirements make it suitable for resource-constrained environments, while its theoretical guarantees ensure optimal performance regardless of where in the continuum the combination occurs.

For practical deployment, the learnability ratio provides system designers with quantitative guidance on whether bias learning is worthwhile for their specific application. Systems with high learnability ratios can achieve significant performance improvements through bias correction, while those with low learnability may be better served by simpler averaging approaches, saving computational resources for other tasks.

\subsection{When to Use Bias Learning}

Based on our analysis, we recommend bias learning when:
\begin{itemize}
\item \textbf{High learnability} ($\bar{\lambda} > 0.5$): Biases show systematic patterns correlated with covariates
\item \textbf{Adequate signal-to-noise} ($\bar{\beta}^2/\bar{\sigma}^2 > 0.5$): Bias correction can make meaningful difference
\item \textbf{Sufficient data} ($T > 10(d + \sum_i p_i)$): Enough samples to learn patterns reliably
\item \textbf{Distributed processing requirements}: When combining insights from multiple agents across the AI continuum
\end{itemize}

Conversely, simpler methods may be preferable when biases are mostly random, measurement noise dominates, data is severely limited, or when all agents operate under nearly identical conditions.

\subsection{Limitations and Extensions}

Several limitations merit discussion:

\begin{enumerate}
\item \textbf{Weight optimization simplicity}: While more complex weight structures (such as matrix weights) are theoretically possible, they lead to non-convex optimization problems. Our scalar weight formulation ensures tractability while capturing the essential performance gains from bias learning.

\item \textbf{Stationarity}: We assume bias functions are stationary over the observation period. Time-varying biases would require sliding window or online learning extensions, particularly relevant for long-term on-device deployments.

\item \textbf{Known covariates}: We assume relevant covariates are known and observable. Covariate selection remains an open problem, though end devices often have access to rich contextual information (GPS, accelerometers, environmental sensors) that can serve as covariates.

\item \textbf{Communication overhead}: While not explicitly modeled, the framework could be extended to account for communication costs in distributed settings, trading off improved accuracy against bandwidth consumption.
\end{enumerate}

\section{Conclusion}

This paper has developed a framework for optimal information combining in multi-agent systems with learnable biases, inspired by the diversity combining principles of the Alamouti code in wireless communications. The key theoretical contribution is establishing fundamental bounds on achievable performance based on the fraction of bias that is predictable from covariates. These bounds are tight for scalar weight combinations and provide quantitative guidance for system design.

The ABLOC algorithm provides a practical approach with guaranteed convergence through the use of scalar weights and closed-form optimization. While this represents a simplification from the most general matrix weight formulation, it ensures mathematical tractability and practical implementability while maintaining the essential theoretical insights. The algorithm's rapid convergence and modest computational requirements make it particularly suitable for on-device environments where resources are constrained.

The framework's relevance to Hybrid Edge Cloud and Device-First Continuum AI architectures addresses a critical need in modern distributed systems where agents on end devices must combine their observations optimally. ABLOC provides both the theoretical foundation and practical tools for achieving this goal, with the flexibility to perform combination at any point in the AI continuum based on application requirements.

Experimental validation on synthetic data with controlled learnability demonstrates that practical algorithms achieve significant fractions of theoretical bounds, with detailed configurations provided for reproducibility. The framework applies broadly to sensor networks, distributed estimation, crowdsourcing, and ensemble prediction systems, with the degree of benefit depending critically on the learnability structure of the specific domain.

Future work could explore extensions to online learning for non-stationary biases and incorporation of communication costs in the optimization framework, particularly relevant for distributed AI systems where bandwidth and latency constraints affect where in the continuum observations should be combined.

\end{document}